\DeclareMathOperator*{\combine}{\mathrm{Concat}}
\DeclareMathOperator{\softmax}{\mathrm{Softmax}}
\newcommand{\norm}[1]{#1}%{\mathrm{norm}(#1)}
\newcommand{\T}{^\mathsf{T}}
\DeclareMathOperator{\mlp}{\mathrm{MLP}}
\newcommand{\R}{\mathbb{R}}
\newcommand{\G}{\mathbf{G}}
\newcommand{\gradecomma}{,}
\renewcommand{\P}{\mathrm{P}}
\newcommand{\C}{\mathrm{C}}
\newcommand{\set}[2]{\{ #1 \hspace{2pt}|\hspace{2pt} #2 \}}
\DeclareMathOperator*{\argmin}{\mathrm{arg} \hspace{2pt} \mathrm{min}}
\newcommand{\revision}[1]{\textcolor{black}{#1}}
\newcommand{\revisionout}[1]{}%{\textcolor{blue}{\sout{#1}}}
\begin{document}
\title{LaB-GATr: geometric algebra transformers for large biomedical surface and volume meshes}
\titlerunning{$\G(3, 0, 1)$ transformers for large biomedical meshes}
% If the paper title is too long for the running head, you can set
% an abbreviated paper title here
%
\author{Julian Suk\inst{*}\orcidID{0000-0003-0729-047X} \and
Baris Imre\inst{*}\orcidID{0009-0005-3724-699X} \and\\
Jelmer M. Wolterink\orcidID{0000-0001-5505-475X}}
% \index{Suk, Julian}
% \index{Imre, Baris}
% \index{Wolterink, Jelmer}
% \author{Anonymous}
% Barış İmre
\authorrunning{J. Suk et al.}
% \authorrunning{Anonymous}
% First names are abbreviated in the running head.
% If there are more than two authors, 'et al.' is used.
%
\institute{Department of Applied Mathematics, Technical Medical Centre, University of Twente, Enschede, The Netherlands\\
\email{\{j.m.suk, j.m.wolterink\}@utwente.nl}}
% \institute{Anonymous Organisation\\
% \email{*******@*******.**}}
%
\maketitle              % typeset the header of the contribution
\begin{abstract}
Many anatomical structures can be described by surface or volume meshes. Machine learning is a promising tool to extract information from these 3D models. However, high-fidelity meshes often contain hundreds of thousands of vertices, which creates unique challenges in building deep neural network architectures. Furthermore, patient-specific meshes may not be canonically aligned which limits the generalisation of machine learning algorithms. We propose LaB-GATr, a transfomer neural network with geometric tokenisation that can effectively learn with large-scale (bio-)medical surface and volume meshes through sequence compression and interpolation. Our method extends the recently proposed geometric algebra transformer (GATr) and thus respects all Euclidean symmetries, i.e. rotation, translation and reflection, effectively mitigating the problem of canonical alignment between patients. LaB-GATr achieves state-of-the-art results on three tasks in cardiovascular hemodynamics modelling and neurodevelopmental phenotype prediction, featuring meshes of up to 200,000 vertices. Our results demonstrate that LaB-GATr is a powerful architecture for learning with high-fidelity meshes which has the potential to enable interesting downstream applications. Our implementation is publicly available.
% \footnote[2]{\url{github.com/sukjulian/anonymous}}
\textcolor{white}{\footnote[1]{Equal contribution.}}\footnote[2]{\href{https://github.com/sukjulian/lab-gatr}{\texttt{github.com/sukjulian/lab-gatr}}}

\keywords{Deep learning \and Attention models \and Cardiovascular hemodynamics \and Neuroimaging \and Geometric algebra.}
\end{abstract}
\section{Introduction}
Deep neural networks can leverage biomedical data to uncover previously unknown cause-and-effect relations~\cite{VosyliusWang2020} and enable novel ways of medical diagnosis and treatment~\cite{LiWang2021,SarasuaPoelsterl2021}. There has been active research into using deep neural networks for biomedical modelling, such as cardiovascular biomechanics estimation~\cite{ArzaniWang2022} and neuroimage analysis based on the cortical surface~\cite{ZhaoWu2023}. These applications feature 3D mesh representations of patient anatomy. Depending on the downstream application, 3D meshes either discretise the surface of an organ or vessel with, e.g., triangles, or the interior with, e.g., tetrahedra. Graph neural networks (GNN) have gained traction in the context of learning with 3D meshes due to their direct applicability, flexibility regarding mesh size, and good performance on several benchmarks~\cite{FawazWilliams2021,SukBrune2023,SukHaan2022}. However, GNNs are known to suffer from over-squashing, i.e. the loss of accuracy when compressing exponentially growing information into fixed-sized channels when propagating messages on long paths~\cite{AlonYahav2021}. This makes them inefficient at accumulating large enough receptive fields, capable of learning global interactions across meshes. PointNet++~\cite{QiYi2017} can circumvent this issue to some extent via pooling to coarser surrogate graphs, but this requires problem-specific hyperparameter setup and might still fail around bottlenecks.
% Transformer models, built on the principle of self-attention, have become the state-of-the-art architecture in natural language processing. Vision transformers, operating on image patches, contest the hegemony of convolutional neural networks in computer vision.
In contrast, the transformer architecture~\cite{VaswaniShazeer2017}, treats its input as a sequence of tokens (or patches) and models all pair-wise interactions, thus aggregating global context after a single layer. In medical imaging, transformers have been successfully applied in the context of, e.g., semantic segmentation of 2D histopathology images~\cite{JiZhang2021} and 3D brain tumor magnetic resonance images (MRI)~\cite{HatamizadehNath2022}. Nevertheless, applications to 3D biomedical meshes remain scarce due to the difficulty of finding a unified framework that addresses their sheer size, commonly in the hundreds of thousands of vertices. Dahan et al.~\cite{DahanFawaz2022,DahanFawaz} addressed this by aligning 3D cortical surface meshes across subjects via morphing to an icosphere, which allows for segmentation into coarser triangular patches.

The recently proposed geometric algebra transformer (GATr)~\cite{BrehmerHaan2023} has achieved state-of-the-art accuracy on several geometric tasks, largely due to the incorporation of task-specific symmetries ($\mathrm{SE}(3)$-equivariance) and modelling of interactions via geometric algebra.\footnote{We refer the interested reader to \cite{BrandstetterBerg2023,RuheGupta2023,RuheBrandstetter2023}.}
Even though GATr uses memory-efficient attention~\cite{RabeStaats} with linear complexity, large 3D meshes still cause it to exceed GPU memory during training. In the context of graph transformers, the common bottleneck of GPU memory has been addressed by (1) sparse, local attention mechanisms together with expander graph rewiring~\cite{DeacLackenby2022,ShirzadVelingker2023}, as well as (2) vertex clustering together with learned graph pooling and upsampling~\cite{JannyBeneteau2023,KongChen2023}.

In this work, we scale GATr to \textbf{la}rge-scale \textbf{b}iomedical (LaB-GATr) surface and volume meshes. Since each of these meshes discretises a continuous shape, mesh connectivity should be treated as an artefact, not a feature. Thus, we opt for a vertex clustering approach to make self-attention tractable. We derive a general-purpose tokenisation algorithm and interpolation method with learned feature representations in geometric algebra space, which retains all symmetries of GATr as well as the original mesh resolution, while decreasing the number of active tokens used in self-attention. We demonstrate the efficacy of our method on three tasks involving biomedical meshes. LaB-GATr sets a new state-of-the-art in prediction of blood velocity in high-fidelity, synthetic coronary artery meshes~\cite{SukBrune2023} which would exceed 48 GB of VRAM when using GATr with the same number of parameters. Furthermore, Lab-GATr excels at phenotype prediction of cortical surfaces from the \href{http://www.developingconnectome.org/}{Developing Human Connectome Project} (dHCP)~\cite{EdwardsRueckert2022}, setting a new state-of-the-art in postmenstrual age (PMA) estimation without morphing the cortical surface mesh to an icosphere. We provide a modular implementation of Lab-GATr and an interface with which the geometric algebra back-end can be treated as black box.

\begin{figure}[t]
	\includegraphics[width=\columnwidth]{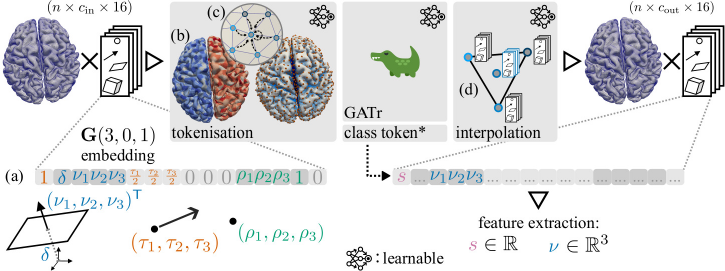}
	\caption{\textbf{LaB-GATr} takes input features in the form of multivectors, which are constructed by embedding, e.g., mesh vertices as points and surface normal vectors as planes (see Section~\ref{sec:embed}). In the tokenisation module, the features are pooled to a coarse subset of mesh vertices via message passing (see Section~\ref{sec:pool}). The tokenisation allows control over the number of tokens which are processed by the GATr module. Downstream, the interpolation module lifts the tokenisation back to original mesh resolution (see Section~\ref{sec:interp}). An optional ($^*$) class token is appended to the token sequence for mesh-level output. Subsequently, scalar or vector-valued output features are extracted.}
	\label{fig:zero}
\end{figure}

\section{Methods}
Fig.~\ref{fig:zero} provides an overview of our method. We extend GATr by geometric tokenisation for scaling to high-fidelity meshes. In the following, we discuss background on geometric algebra and transformers before introducing our contribution.

\subsection{Geometric algebra}
GATr is built on the geometric algebra $\G(3, 0, 1)$ which uses projective geometry: a fourth coordinate is appended to 3D space, so translation can be expressed as linear maps. At the core of geometric algebra lies the introduction of an associative (but not commutative) geometric product of vectors \revision{$y, z$}, simply denoted as \revision{$yz$}. Given a 4D orthogonal basis $\{e_i\}_i$, it holds that $e_0 e_0 = 0$, $e_i e_i = 1$ $(i \neq 0)$, and $e_i e_j = -e_j e_i$ $(i \neq j)$. All possible, linearly independent geometric products span a 16-dimensional vector space of multivectors $x \in \G(3, 0, 1)$:
\begin{equation}\label{eq:multivector}
	x = (x_s\gradecomma
	\underbrace{x_0, x_1, x_2, x_3}_\text{vectors}\gradecomma
	\underbrace{x_{01}, x_{02}, x_{03}, x_{12}, x_{13}, x_{23}}_\text{bivectors}\gradecomma
	\underbrace{x_{012}, x_{013}, x_{023}, x_{123}}_\text{trivectors}\gradecomma
	x_{0123}
	)
\end{equation}
which can represent geometric quantities like points and planes.

\subsection{Embedding meshes in $\G(3, 0, 1)$}\label{sec:embed}
Consider an arbitrary surface or volume mesh consisting of $n$ vertices. For each vertex, we construct a $d$-dimensional positional encoding which describes its unique geometric properties within this mesh. Each positional encoding is composed of a set of $c$ geometric objects, e.g. the surface normal vector (for surface meshes) or the (scalar) distance to the surface (for volume meshes). Table~\ref{tab:embedding} provides a look-up on how to embed relevant geometric objects, such as points and planes, as multivectors $x \in \G(3, 0, 1)$ (see Fig.~\ref{fig:zero}a). Consequently, we describe each mesh by a tensor $X^{(0)} \in \R^{n \times d}$ with $d = c \cdot 16$.

\begin{table}[t]
	\begin{center}
		\caption{Embedding of some common geometric objects as $x \in \G(3, 0, 1)$. See \eqref{eq:multivector} for the 16 multivector components. Other multivector components remain zero. In geometric algebra, geometric operations can be multivectors just like geometric objects.}
		\begin{tabular}{@{}lrcl@{}}
			\toprule
			Geometric object / operation & \multicolumn{3}{r}{Multivector mapping}\\
			\midrule
			Scalar $s \in \R$ & $x_s$ & $=$ & $s$\\
			Plane with normal $\nu \in \R^3$ and offset $\delta \in \R$ & $(x_0, x_1, x_2, x_3)$ & $=$ & $(\delta, \nu)$\\
			Point $\rho \in \R^3$ & $(x_{012}, x_{013}, x_{023}, x_{123})$ & $=$ & $(\rho, 1)$\\
			\midrule
			Translation $\tau \in \R^3$ & $(x_s, x_{01}, x_{02}, x_{03})$ & $=$ & $(1, \frac{1}{2} \tau)$\\
			\bottomrule
			\label{tab:embedding}
		\end{tabular}
	\end{center}
\end{table}

\subsection{Geometric algebra transformers}
Given a tensor of input features $X^{(l)}$, we define transformer blocks as follows:
\begin{align*}
	A^{(l)} &= X^{(l)} + \xi\left(\combine_h \hspace{2pt} \softmax\left( \frac{q_h(\norm{X^{(l)}}) k_h(\norm{X^{(l)}})\T}{\sqrt{d}} \right) v_h(\norm{X^{(l)}})\right)\\
	X^{(l + 1)} &= A^{(l)} + \mlp(\norm{A^{(l)}}).
\end{align*}
As is common practice, $q_h, k_h, v_h \colon \R^{n \times d} \to \R^{n \times d}$ consist of layer normalisation composed with learned linear maps. Multi-head self-attention~\cite{VaswaniShazeer2017} over heads indexed by $h$ is implemented via concatenation followed by a learned linear map $\xi$. GATr~\cite{BrehmerHaan2023} introduced layer normalisation, linear and nonlinear maps $\G(3, 0, 1)^{n \times c} \to \G(3, 0, 1)^{n \times c}$, and gated activation functions which can be used to construct $\xi$, $q_h, k_v, v_h$, and $\mlp$ in geometric algebra. GATr blocks are equivariant under rotations, translations and reflections $\rho \in \mathrm{E}(3)$ of the input geometry encoded in $X^{(l)}$, i.e. they map $\rho X^{(l)} \mapsto \rho X^{(l + 1)}$.

\subsection{Learned, geometric tokenisation}
In the following, we introduce tokenisation layers for large surface and volume meshes, allowing us to compress the token sequence for the transformer blocks.

\subsubsection{Pooling}\label{sec:pool}
From the point cloud $\P_\text{fine}$ consisting of the $n$ mesh vertices, we compute a subset of points $\P_\text{coarse}$ via farthest point sampling. We partition the mesh vertices into tokens by assigning each $p \in \P_\text{coarse}$ the (disjoint) cluster of closest points in $\P_\text{fine}$ (see Fig.~\ref{fig:zero}b):
\[
\C(p) = \set{v \in \P_\text{fine}}{p = \argmin_{q \in \P_\text{coarse}} \Vert v - q \Vert_2}.
\]
This means that each point in $\P_\text{fine}$ is clustered with the point in $\P_\text{coarse}$ to which it is the closest. Define $\revision{n_\text{coarse}} = |\P_\text{coarse}|$. Given a tensor of input features $X \in \R^{n \times d}$, we perform learned message passing within these clusters (see Fig.~\ref{fig:zero}c) as follows:
\begin{align*}
	m_{v \to p} &= \mlp(X^{(0)}|_v, \hspace{2pt} p - v) && (\text{message from} \hspace{2pt} \P_\text{fine} \hspace{2pt} \text{to} \hspace{2pt} \P_\text{coarse})\\
	X^{(1)}|_p &= \frac{1}{|\C(p)|} \sum_{v \in C(p)} \hspace{2pt} m_{v \to p} && (\text{aggregation of} \hspace{2pt} X^{(1)} \in \R^{\revision{n_\text{coarse}} \times d})
\end{align*}
where $X|_q$ denotes the row of $X$ corresponding to point $q$. This layer maps $\R^{n \times d} \to \R^{\revision{n_\text{coarse}} \times d}$ with $\revision{n_\text{coarse}} < n$. Within this framework, we can embed $p - v$ as translation (see Table~\ref{tab:embedding}) and use the \revision{multilayer perceptron ($\mlp$)} introduced by \cite{BrehmerHaan2023} to reduce the number of tokens in a way that is fully compatible with GATr. In particular, this layer respects all symmetries of $\G(3, 0, 1)$.

\subsubsection{Interpolation}\label{sec:interp}
Given a tensor $X^{(l)} \in \R^{\revision{n_\text{coarse}} \times d}$ we define learned\revisionout{, barycentric} interpolation to the original mesh resolution $Y \in \R^{n \times d}$ as follows:
\begin{align*}
	X^{(l + 1)}|_v &= \frac{\sum_p \lambda_{p, v} \hspace{2pt} X^{(l)}|_p}{\sum_p \lambda_{p, v}}, \hspace{16pt} \lambda_{p, v} \coloneqq \frac{1}{\Vert p - v \Vert_2^2 + \revision{\epsilon}},\\
	Y &= \mlp(X^{(l + 1)}, X^{(0)})
\end{align*}
where for each $v \in \P_\text{fine}$ we sum over the three (four) closest points $p \in \P_\text{coarse}$ for surface (volume) meshes (see Fig.~\ref{fig:zero}d) \revision{and $\epsilon$ is a small constant}. This layer lifts the tokenisation and ensures neural-network output in original mesh resolution. The \revisionout{barycentric} interpolation provably behaves as expected in $\G(3, 0, 1)$ (see appendix) \revision{and by using the $\mlp$ introduced by \cite{BrehmerHaan2023} this layer respects all symmetries of $\G(3, 0, 1)$}.

\subsection{Neural network architecture}\label{sec:architecture}
LaB-GATr is composed of geometric algebra embedding, tokenisation module, GATr module, and interpolation module followed by feature extraction (see Fig.~\ref{fig:zero}). We embed the input mesh in $\G(3, 0, 1)$ (see Section~\ref{sec:embed}) based on an application-specific set of geometric descriptors. The embedding $X^{(0)} \in \R^{n \times d}$ is pooled (see Section~\ref{sec:pool}) and the resulting $\revision{n_\text{coarse}}$ tokens are fed into a GATr module. For mesh-level tasks, we append a global class token which we embed as the mean of the $\revision{n_\text{coarse}}$ tokens. For vertex-level tasks, the $m$ output tokens of the transformer are interpolated (see Section~\ref{sec:interp}) back to the original mesh resolution. For classification tasks (like segmentation), $\softmax$ can be used over the channel dimension at the mesh- or vertex-level output.

\begin{figure}[t]
	\includegraphics[width=\columnwidth]{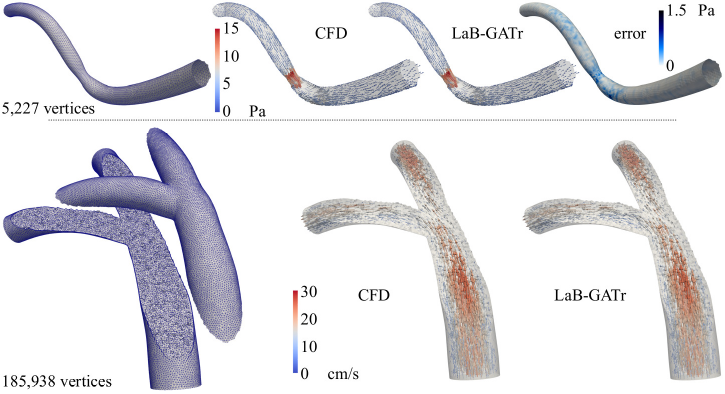}
	\caption{Qualitative results of cardiovascular hemodynamics estimation on test-split arteries. Left: (top) surface and (bottom) volume mesh. Middle: (top) wall shear stress and (bottom) \revision{velocity field, via computational fluid dynamics (CFD)}. Right: LaB-GATr prediction.}
	\label{fig:cardiovascular}
\end{figure}

\section{Experiments and results}
We evaluate LaB-GATr on three tasks previously explored with GNNs and transformers. Since all of these are regression tasks, we train LaB-GATr under $\mathrm{L^1}$ loss using the Adam optimiser~\cite{KingmaBa2015} and learning rate $3\mathrm{e}{-4}$ with exponential decay on \textsc{Nvidia} L40 (48 GB) GPUs. We use the same number of channels and attention heads throughout all experiments, leading to around 320k trainable parameters.

\begin{table*}[t]
	\caption{Comparison of LaB-GATr with state-of-the-art baselines. For WSS and hemodynamics estimation, we report mean $\varepsilon$ (lower is better) across the test set. For PMA \revisionout{and GA} estimation, some of the referenced works report lowest and some average $\mathrm{MAE}$ across three training runs. We report lowest $\mathrm{MAE}$ in brackets where applicable.}
	\centering
	\begin{tabular}{@{}lllccc@{}}
		\toprule
		Domain & Dataset & Model & \multicolumn{2}{c}{Disparity} & Metric\\
		&&& (average) & (lowest) &\\
		\midrule
		\multirow{8}{*}{Cardiovascular} & \multirow{6}{*}{WSS~\cite{SukHaan2022}} & GEM-CNN~\cite{SukHaan2022_} & \multicolumn{2}{c}{7.8} & \multirow{6}{*}{$\varepsilon$ [\%]}\\
		&& GATr~\cite{BrehmerHaan2023} & \multicolumn{2}{c}{\textbf{5.5}} & \\
		&& \textit{LaB-GATr} & \multicolumn{2}{c}{\textbf{\hphantom{$^*$}5.5$^*$}} & \\
		\cmidrule{3-5}
		&& \textit{LaB-GATr}$^1$ & \multicolumn{2}{c}{7.0} & \\
		&& \textit{LaB-GATr}$^2$ & \multicolumn{2}{c}{12.3} & \\
		\cmidrule{2-5}
		& \multirow{2}{*}{Velocity~\cite{SukBrune2023}} & SEGNN~\cite{SukBrune2023} & \multicolumn{2}{c}{7.4} & \\
		&& \textit{LaB-GATr} & \multicolumn{2}{c}{\textbf{3.\revision{3}}} & \\
		\midrule
		\multirow{4}{*}{Neuroimaging} & \multirow{4}{68pt}{PMA~\cite{EdwardsRueckert2022} (native)} & SiT~\cite{DahanFawaz2022} & – & (0.68) & \multirow{4}{*}{$\mathrm{MAE}$ [weeks]}\\
		&& MS-SiT~\cite{DahanFawaz} & 0.59 & – & \\
		&& \cite{UnyiGyiresToth2022} & – & (0.54)& \\
		&& \textit{LaB-GATr} & \textbf{0.54} & (0.52) & \\
		\bottomrule
		\multicolumn{6}{l}{$^*$with 10-fold compression \hspace{2pt} $^1$static pooling module $^2$static interpolation module}
		\label{tab:quantitative}
	\end{tabular}
\end{table*}

\subsection{Coronary artery models}
\subsubsection{Surface-based WSS estimation}
The publicly available dataset~\cite{SukHaan2022} consists of 2,000 synthetic, coronary artery surface meshes (around 7\revision{,000} vertices) with simulated, steady-state wall shear stress (WSS) vectors. We chose $\revision{n_\text{coarse}} = 0.1 \cdot n$ in the tokenisation (see Section~\ref{sec:pool}) and embed vertex positions as points (see Table~\ref{tab:embedding}), surface normal vectors as oriented planes, and geodesic distances to the artery inlet as scalars. The GATr back-end (see Section~\ref{sec:architecture}) was set up identical to \cite{BrehmerHaan2023}. We trained LaB-GATr on a single GPU for 4,000 epochs (58 $\frac{\text{s}}{\text{epoch}}$) with batch size 8 on an 1600:200:200 split of the dataset. Fig.~\ref{fig:cardiovascular} shows an example of LaB-GATr prediction on a test-split mesh. Table~\ref{tab:quantitative} shows approximation error $\varepsilon$~\cite{SukHaan2022_,BrehmerHaan2023} of LaB-GATr compared to the baselines. LaB-GATr matches GATr's accuracy $\varepsilon = 5.5 \%$ \revision{(standard deviation was $\pm 2.0 \%$ over 200 test cases)} despite its 10-fold compression of the token sequence. Ablation of the \revision{learnable parameters} of \revision{the} pooling and interpolation module reveals that \revision{learned interpolation} plays a bigger role in performance than \revision{learned pooling}.

\subsubsection{Volume-based velocity field estimation}
The publicly available dataset~\cite{SukBrune2023} consists of 2,000 synthetic, bifurcating coronary artery volume meshes (around 175\revision{,000} vertices) with simulated, steady-state velocity vectors. We chose $\revision{n_\text{coarse}} = 0.01 \cdot n$ for the tokenisation and embed vertex positions as points, directions to the artery inlet, outlets, and wall as oriented planes, and distances to the artery inlet, outlets, and wall as scalars. The size of these volume meshes caused GATr to exceed memory of an \textsc{Nvidia} L40 (48 GB) GPU. Leveraging our tokenisation enabled training LaB-GATr with batch size 1. We trained LaB-GATr on four GPUs in parallel for 300 epochs (10:24 $\frac{\text{min}}{\text{epoch}}$) on an 1600:200:200 split of the dataset. Fig.~\ref{fig:cardiovascular} shows an example of LaB-GATr prediction on a test-split mesh. Table~\ref{tab:quantitative} compares LaB-GATr to \cite{SukBrune2023}. LaB-GATr sets a new state-of-the-art for this dataset with $\varepsilon = 3.\revision{3} \%$ \revision{(standard deviation was $\pm 4.4 \%$ over 200 test cases)} compared to the previous $7.4 \%$.

\subsection{Postmenstrual age prediction from the cortical surface}
The publicly available third release of dHCP~\cite{EdwardsRueckert2022} consists of 530 \revision{newborns'} cortical surface meshes (81,924 vertices each) which are symmetric across hemispheres. We estimate the subjects' postmenstrual age (PMA) at the time of scan. We chose $\revision{n_\text{coarse}} = 0.024 \cdot n$ for the tokenisation and embed vertex positions as points, surface normal vectors as oriented planes, the reflection planes between symmetric vertices as oriented planes, and myelination, curvature, cortical thickness, and sulcal depth as scalars. Since we observed \revision{quick} convergence on the validation split, we trained LaB-GATr on a single GPU for only 200 epochs (1:38 $\frac{\text{min}}{\text{epoch}}$) with batch size 4 on the 423:53:54 splits used in \cite{DahanFawaz}.
% We store model weights corresponding to the four best validation-split losses during training and average their test-split results. Furthermore, as in \cite{DahanFawaz}, we train the same model three times and average its test-split results.
Figure~\ref{fig:neuroimaging} shows a Bland-Altman plot and the cortical surface of two subjects, indicating good accuracy. Table~\ref{tab:quantitative} shows mean absolute error ($\mathrm{MAE}$) and comparison to the baselines. In contrast to all baselines, LaB-GATr runs directly on the cortical surface mesh without morphing to a sphere. LaB-GATr sets a new state-of-the-art in PMA estimation on "native space"~\cite{FawazWilliams2021} with average $\mathrm{MAE} = 0.54$ weeks \revision{(standard deviation was $\pm 0.39$ weeks over 54 test cases times three runs)}.

\begin{figure}[t]
	\includegraphics[width=\columnwidth]{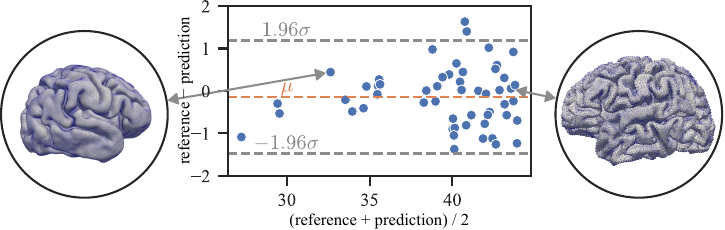}
	\caption{Postmenstrual age (PMA) prediction \revision{(values in weeks)} on test-split subjects. LaB-GATr predictions based on the newborn's cortical surface correlated well with the reference values.}
	\label{fig:neuroimaging}
\end{figure}

\section{Discussion and conclusion}
In this work, we propose LaB-GATr, a general-purpose geometric algebra transformer for large-scale surface and volume meshes, often found in biomedical engineering. We extend GATr by learned tokenisation and interpolation in geometric algebra. Our method \revision{can be understood as} a thin PointNet++~\cite{QiYi2017} wrapper which is adapted to projective geometric algebra. Through the self-attention mechanism, LaB-GATr models global interactions within the mesh, while avoiding the over-squashing phenomenon exhibited by GNNs. Notably, LaB-GATr is equivariant to rotations, translations, and reflections of the input mesh, thus circumventing the problem of cortical surface alignment. In our experiments, LaB-GATr matched the performance of GATr~\cite{BrehmerHaan2023} on the same dataset, suggesting near loss-less compression. \revision{Even though LaB-GATr introduces additional, trainable parameters over GATr, we found that computing self-attention between less tokens outweighed the parameter overhead and led to favourable training times.} Beside estimating PMA of subjects from dHCP, we also attempted gestational age (GA) estimation with LaB-GATr. However, we observed considerably \revision{lower accuracy}\revisionout{, the cause of which we are currently investigating}. \revision{We believe that GA can be best explained by vertex-specific biomarkers such as myelination, which was provided on sphericalised and subsequently sub-sampled brains and back-projection to the cortical surface erased some of their spatial context.}

\revision{In  theory}, our geometric pooling scheme introduces all necessary building blocks to define patch merging and to build a geometric version of sliding window (\revision{Swin}) attention~\cite{LiuLin2021}, which is an interesting direction for future work. We believe that Lab-GATr has potential as general-purpose model for learning with large (bio-)medical surface and volume meshes, enabling interesting downstream applications. In particular, we are interested to explore brain parcellation and attention-map-based analysis of biomedical pathology.  % "neural covariance"
Geometric algebra introduces an inductive bias to our learning system. In future work, we aim to investigate to what extent this affects LaB-GATr predictions and derive theoretical guarantees.
% hemisphere reflection embedding
% hopefully, LaB-GATr focusses on the embedding rather than the clustering (in light of permutation-equivariance)

% \section{Conclusion}

\begin{credits}
\subsubsection{\ackname} This work is funded in part by the 4TU Precision Medicine programme supported by High Tech for a Sustainable Future, a framework commissioned by the four Universities of Technology of the Netherlands. Jelmer M. Wolterink was supported by the NWO domain Applied and Engineering Sciences Veni grant (18192).

We would like to thank Simon Dahan for his efforts in providing the dHCP dataset as well as Pim de Haan and Johann Brehmer for the fruitful discussions about geometric algebra.

\subsubsection{\discintname}
The authors have no competing interests to declare that are relevant to the content of this article.
\end{credits}
%
% ---- Bibliography ----
%
% BibTeX users should specify bibliography style 'splncs04'.
% References will then be sorted and formatted in the correct style.
%
\bibliographystyle{splncs04}
\bibliography{bibliography}

\subsubsection{Appendix}
\begin{proposition}
	Consider a set of multivectors $x^i \in \mathbf{G}(3, 0, 1)$ and denote by
	\[
	t(x^i) \coloneqq \frac{1}{x^i_{123}}
	\begin{pmatrix}
		x^i_{012}\\
		x^i_{013}\\
		x^i_{023}
	\end{pmatrix}
	\]
	the extraction of point coordinates from a multivector. Assume $x^i_{123} > 0$. Then the point extracted from convex combination of $x^i$ is an element of the convex hull of $\{t(x^i)\}_i$.
\end{proposition}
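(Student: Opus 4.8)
The plan is to exploit that the extraction map $t$ depends only on the four trivector components $(x_{012},x_{013},x_{023},x_{123})$, which play the role of homogeneous coordinates of a point in projective $3$-space, and that forming a convex combination of multivectors acts componentwise (hence linearly) on these coordinates. Write the convex combination as $y = \sum_i \mu_i x^i$ with $\mu_i \ge 0$ and $\sum_i \mu_i = 1$. First I would observe that, by linearity, $y_{123} = \sum_i \mu_i x^i_{123}$; since every $x^i_{123} > 0$ and the $\mu_i$ are nonnegative with positive sum, this forces $y_{123} > 0$, so $t(y)$ is well defined and the statement is not vacuous.

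Next I would compute $t(y)$ directly from the definition. The numerator of $t(y)$ is $\sum_i \mu_i (x^i_{012},x^i_{013},x^i_{023}) = \sum_i \mu_i x^i_{123}\, t(x^i)$, where the second equality just re-inserts the definition of $t(x^i)$. Dividing by $y_{123}$ yields
\[
t(y) = \sum_i \frac{\mu_i x^i_{123}}{y_{123}}\, t(x^i).
\]
I would then check that the coefficients $w_i \coloneqq \mu_i x^i_{123} / y_{123}$ form a convex weighting: each $w_i$ is a product of nonnegatives divided by a positive number, hence $w_i \ge 0$, and $\sum_i w_i = \big(\sum_i \mu_i x^i_{123}\big)/y_{123} = y_{123}/y_{123} = 1$. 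Therefore $t(y)$ is a genuine convex combination of the points $t(x^i)$ and so lies in their convex hull, which is exactly the claim.

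As for difficulty: there is essentially no obstacle, and the proof is a two-line computation. The only subtlety worth flagging is that the mixing weights are \emph{not} the original $\mu_i$ but the perspective-corrected weights $w_i$, rescaled by the homogeneous coordinates $x^i_{123}$; this is precisely why the hypothesis $x^i_{123} > 0$ (in particular, the same sign for all $i$) is needed — both to keep $y_{123} \ne 0$ and to keep every $w_i \ge 0$. Were the $x^i_{123}$ allowed to take mixed signs, the conclusion would fail. In the paper's setting this hypothesis is automatic: Table~\ref{tab:embedding} assigns $x_{123} = 1$ to genuine points, and the interpolation of Section~\ref{sec:interp} only ever takes (normalised, hence convex) combinations of such embeddings, so the extracted point always stays inside the convex hull of the coarse points — behaving as one would want.
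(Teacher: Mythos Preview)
Your proof is correct and follows essentially the same route as the paper: form the convex combination, compute $t$ of it componentwise, and recognise the result as a convex combination of the $t(x^i)$ with the perspective-rescaled weights $\mu_i x^i_{123}/\sum_j \mu_j x^j_{123}$. Your version is arguably a touch cleaner in that you explicitly verify $y_{123}>0$ before applying $t$, and your remark on why the sign hypothesis is needed (and why it is automatic in the paper's setting) is a welcome addition.
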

\begin{proof}
	Let $w = \sum_i \omega_i x^i$, such that $\omega_i > 0$ and $\sum_i \omega_i = 1$. Then
	\[
	t(w) = \frac{1}{w_{123}}
	\begin{pmatrix}
		w_{012}\\
		w_{013}\\
		w_{023}
	\end{pmatrix} = \frac{1}{\sum_i \omega_i x^i_{123}}
	\begin{pmatrix}
		\sum_i \omega_i x^i_{012}\\
		\sum_i \omega_i x^i_{013}\\
		\sum_i \omega_i x^i_{023}
	\end{pmatrix}.
	\]
	Define
	\[
	\omega'_i = \frac{\omega_i x^i_{123}}{\sum_j \omega_j x^j_{123}}
	\]
	and note that $\omega'_i > 0$ and $\sum_i \omega'_i = 1$. Now
	\begin{align*}
		\begin{pmatrix}
			\sum_i \frac{\omega_i}{\sum_j \omega_j x^j_{123}} x^i_{012}\\
			\sum_i \frac{\omega_i}{\sum_j \omega_j x^j_{123}} x^i_{013}\\
			\sum_i \frac{\omega_i}{\sum_j \omega_j x^j_{123}} x^i_{023}
		\end{pmatrix} &=
		\begin{pmatrix}
			\sum_i \frac{\omega'_i}{x^i_{123}} x^i_{012}\\
			\sum_i \frac{\omega'_i}{x^i_{123}} x^i_{013}\\
			\sum_i \frac{\omega'_i}{x^i_{123}} x^i_{023}
		\end{pmatrix}\\
		&= \sum_i \omega'_i t(x^i)
	\end{align*}
	and thus
	\[
	t(w) = t(\sum_i \omega_i x^i) = \sum_i \omega'_i t(x^i).
	\]
	Since the convex hull of a set of points is defined as the set of all convex combinations of its elements, $t(\sum_i \omega_i x^i)$ is an element of the convex hull of $\{t(x^i)\}_i$.
\end{proof}

\paragraph{On the choice of sub-sampling ratio.}
We sub-sampled the coronary volume meshes aggressively to make LaB-GATr fit on an NVIDIA L40 (48 GB). We sub-sample the cortical meshes to the same number of tokens. We increased the sampling ratio for the coronary surface meshes while still affording the same batch size used by GATr.

%\end{document}

\end{document}